\title{\LARGE \bf
Provably-Safe Neural Network Training Using \\ Hybrid Zonotope Reachability Analysis
}
\author{Long Kiu Chung and Shreyas Kousik
\thanks{The authors are with the Department of Mechanical Engineering, Georgia Institute of Technology, Atlanta, GA.
\textbf{Corresponding author: } \texttt{lchung33@gatech.edu}.
\textbf{Website: }\href{https://saferoboticslab.me.gatech.edu/research/hybrid-zonotope-training}{https://saferoboticslab.me.gatech.edu/research/hybrid-zonotope-training}.
\textbf{GitHub: }\href{https://github.com/safe-robotics-lab-gt/hybZonoTraining}{https://github.com/safe-robotics-lab-gt/hybZonoTraining}.
}
}
\begin{document}
\newcommand{\edgar}[1]{{{\textcolor{red}{LKC: #1}}}}

\newtheorem{defn}{Definition}
\newtheorem{rem}[defn]{Remark}
\newtheorem{lem}[defn]{Lemma}
\newtheorem{prop}[defn]{Proposition}
\newtheorem{assum}[defn]{Assumption}
\newtheorem{ex}[defn]{Example}
\newtheorem{thm}[defn]{Theorem}
\newtheorem{cor}[defn]{Corollary}
\newtheorem{problem}[defn]{Problem}

\providecommand{\R}{\ensuremath \mathbb{R}}
\newcommand{\Natnum}{\ensuremath \mathbb{N}}
\newcommand{\Ppoly}{P}
\newcommand{\ReLUgraph}{H}
\newcommand{\inset}{Z}
\newcommand{\obs}{U}
\newcommand{\collider}{Q}

\newcommand{\regtext}[1]{\mathrm{\textnormal{#1}}}
\newcommand{\defemph}[1]{\emph{#1}}
\newcommand{\vc}[1]{\mathbf{#1}}
\newcommand{\surr}[1]{\tilde{#1}}

\newcommand{\st}{\regtext{ s.t. }}

\newcommand{\norm}[1]{\left\Vert#1\right\Vert}
\newcommand{\abs}[1]{\left\vert#1\right\vert}
\newcommand{\diag}[1]{\regtext{diag}\!\left(#1\right)}
\newcommand{\absvec}[1]{\regtext{abs}\!\left(#1\right)}
\newcommand{\interior}[1]{\regtext{int}\!\left(#1\right)}
\newcommand{\tp}{^\intercal}
\newcommand{\logn}{\regtext{ln}}
\newcommand{\ind}{\mathbbm{1}}
\newcommand{\bigo}{\mathcal{O}}
\newcommand{\convhull}{\regtext{conv}}
\newcommand{\abselem}{\regtext{abs}}

\newcommand{\idx}[1]{_{#1}}

\newcommand{\binlab}{\regtext{b}}
\newcommand{\conlab}{\regtext{c}}
\newcommand{\genlab}{\regtext{g}}
\newcommand{\scalelab}{\regtext{r}}
\newcommand{\odd}{\regtext{odd}}
\newcommand{\even}{\regtext{even}}

\newcommand{\bsub}{_{\binlab}}
\newcommand{\csub}{_{\conlab}}
\newcommand{\gsub}{_{\genlab}}
\newcommand{\rsub}{_{\scalelab}}

\newcommand{\opt}{^{*}}

\newcommand{\ndim}{n}
\newcommand{\mdim}{m}
\newcommand{\nvert}{\ndim_{\regtext{v}}}
\newcommand{\npoly}{\ndim_{\regtext{N}}}

\newcommand{\zono}{\mathcal{Z}}
\newcommand{\conzono}{\mathcal{CZ}}
\newcommand{\hyzono}{\mathcal{HZ}}
\newcommand{\scaledzono}{\mathcal{SHZ}}

\newcommand{\eye}{\mathbf{I}}
\newcommand{\Rmat}{\vc{R}}
\newcommand{\emptyarr}{[\,]}

\newcommand{\zeros}{\mathbf{0}}
\newcommand{\ones}{\mathbf{1}}
\newcommand{\xv}{\vc{\xs}}
\newcommand{\yv}{\vc{\ys}}
\newcommand{\sv}{\vc{s}}
\newcommand{\vv}{\vc{v}}
\newcommand{\uv}{\vc{u}}
\newcommand{\dv}{\vc{s}}
\newcommand{\paramv}{\vc{k}}

\newcommand{\alphas}{a}
\newcommand{\xs}{x}
\newcommand{\ys}{y}
\newcommand{\zs}{z}
\newcommand{\ctrls}{u}
\newcommand{\ts}{t}
\newcommand{\bitl}{k}
\newcommand{\is}{i}
\newcommand{\rs}{r}
\newcommand{\mus}{\mu}
\newcommand{\surrr}{\surr{\rs}}
\newcommand{\surrl}{\surr{\ell}}

\newcommand{\nn}{\vc{\xi}}
\newcommand{\Wt}{\vc{W}}
\newcommand{\bias}{\vc{w}}
\newcommand{\depth}{d}
\newcommand{\nneu}{\ndim_{\regtext{n}}}
\newcommand{\midset}{Y}
\newcommand{\Rx}[1]{\Rmat_{\xv\idx{#1}}}
\newcommand{\Rv}[1]{\Rmat_{\vv\idx{#1}}}

\newcommand{\Gen}{\vc{G}}
\newcommand{\Genc}{\Gen\csub}
\newcommand{\Genb}{\Gen\bsub}
\newcommand{\Acon}{\vc{A}}
\newcommand{\Aconc}{\Acon\csub}
\newcommand{\Aconb}{\Acon\bsub}
\newcommand{\ctr}{\vc{c}}
\newcommand{\bcon}{\vc{b}}
\newcommand{\coef}{\vc{z}}
\newcommand{\bcoef}{\coef\bsub}
\newcommand{\ccoef}{\coef\csub}
\newcommand{\surrzco}{\surr{\coef}_{\conlab 1}}
\newcommand{\surrzct}{\surr{\coef}_{\conlab 2}}
\newcommand{\surrzcth}{\surr{\coef}_{\conlab 3}}
\newcommand{\surrzb}{{\surr{\coef}_{\binlab}}}
\newcommand{\ncon}{\ndim\csub}
\newcommand{\nbin}{\ndim\bsub}
\newcommand{\ngen}{\ndim\gsub}

\newcommand{\nscale}{\ndim\rsub}

\newcommand{\dynmat}{\vc{C}}
\newcommand{\dynvec}{\vc{d}}
\newcommand{\nstate}{\ndim_{\xs}}
\newcommand{\nctrl}{\ndim_{\ctrls}}
\newcommand{\indomain}{X}
\newcommand{\insetnow}{X_{\ts}}
\newcommand{\insetnext}{X_{\ts+\Delta\ts}}

\newcommand{\hplane}{H}
\newcommand{\Hcon}{\vc{H}}
\newcommand{\hcon}{\vc{h}}

\newcommand{\nparam}{\ndim_{k}}
\newcommand{\paramD}{K}
\newcommand{\tf}{{\ts}_{\regtext{f}}}
\newcommand{\D}{S} 
\newcommand{\Dfun}{\ensuremath \mathbb{S}} 
\newcommand{\initset}{\indomain_0}
\newcommand{\Xgoal}{G}
\newcommand{\Xobs}{O}
\newcommand{\nnk}{\nn_{k}}
\newcommand{\nng}{\nn_{g}}
\newcommand{\err}{\vc{e}}
\newcommand{\HZT}{T}
\newcommand{\HZerr}{E}
\newcommand{\graphK}{\midset_{k}}
\newcommand{\outG}{\Ppoly_{g}}
\newcommand{\obst}{\obs_{\ts}}
\newcommand{\obstf}{\obs_{\tf}}
\newcommand{\robvol}{B}

\newcommand{\Gencin}{\Gen_{\conlab, \inset}}
\newcommand{\Genbin}{\Gen_{\binlab, \inset}}
\newcommand{\Aconcin}{\Acon_{\conlab, \inset}}
\newcommand{\Aconbin}{\Acon_{\binlab, \inset}}
\newcommand{\ctrin}{\ctr_{\inset}}
\newcommand{\bconin}{\bcon_{\inset}}
\newcommand{\nconin}{\ndim_{\conlab, \inset}}
\newcommand{\nbinin}{\ndim_{\binlab, \inset}}
\newcommand{\ngenin}{\ndim_{\genlab, \inset}}
\newcommand{\Gencobs}{\Gen_{\conlab, \obs}}
\newcommand{\Genbobs}{\Gen_{\binlab, \obs}}
\newcommand{\Aconcobs}{\Acon_{\conlab, \obs}}
\newcommand{\Aconbobs}{\Acon_{\binlab, \obs}}
\newcommand{\ctrobs}{\ctr_{\obs}}
\newcommand{\bconobs}{\bcon_{\obs}}
\newcommand{\nconobs}{\ndim_{\conlab, \obs}}
\newcommand{\nbinobs}{\ndim_{\binlab, \obs}}
\newcommand{\ngenobs}{\ndim_{\genlab, \obs}}
\newcommand{\Genccol}{\Gen_{\conlab, \collider}}
\newcommand{\Genbcol}{\Gen_{\binlab, \collider}}
\newcommand{\Aconccol}{\Acon_{\conlab, \collider}}
\newcommand{\Aconbcol}{\Acon_{\binlab, \collider}}
\newcommand{\ctrcol}{\ctr_{\collider}}
\newcommand{\bconcol}{\bcon_{\collider}}
\newcommand{\nconcol}{\ndim_{\conlab, \collider}}
\newcommand{\nbincol}{\ndim_{\binlab, \collider}}
\newcommand{\ngencol}{\ndim_{\genlab, \collider}}
\newcommand{\outset}{\Ppoly_{\depth}}
\newcommand{\Gencout}{\Gen_{\conlab, \outset}}
\newcommand{\Genbout}{\Gen_{\binlab, \outset}}
\newcommand{\Aconcout}{\Acon_{\conlab, \outset}}
\newcommand{\Aconbout}{\Acon_{\binlab, \outset}}
\newcommand{\ctrout}{\ctr_{\outset}}
\newcommand{\bconout}{\bcon_{\outset}}
\newcommand{\outgraph}{\midset_{\depth}}
\newcommand{\Gencog}{\Gen_{\conlab, \outgraph}}
\newcommand{\Genbog}{\Gen_{\binlab, \outgraph}}
\newcommand{\Aconcog}{\Acon_{\conlab, \outgraph}}
\newcommand{\Aconbog}{\Acon_{\binlab, \outgraph}}
\newcommand{\ctrog}{\ctr_{\outgraph}}
\newcommand{\bconog}{\bcon_{\outgraph}}

\maketitle

\begin{abstract}
Even though neural networks are being increasingly deployed in safety-critical control applications, it remains difficult to enforce constraints on their output, meaning that it is hard to \textit{guarantee} safety in such settings.
While many existing methods seek to \textit{verify} a neural network's satisfaction of safety constraints, few address how to correct an unsafe network.
The handful of works that extract a training signal from verification cannot handle non-convex sets, and are either conservative or slow.
To begin addressing these challenges, this work proposes a neural network training method that can encourage the \textit{exact} image of a non-convex input set for a neural network with rectified linear unit (ReLU) nonlinearities to avoid a non-convex unsafe region.
This is accomplished by reachability analysis with \textit{scaled hybrid zonotopes}, a modification of the existing hybrid zonotope set representation that enables parameterized scaling of non-convex polytopic sets with a differentiable collision check via mixed-integer linear programs (MILPs).
The proposed method was shown to be effective and fast for networks with up to 240 neurons, with the computational complexity dominated by inverse operations on matrices that scale linearly in size with the number of neurons and complexity of input and unsafe sets.
We demonstrate the practicality of our method by training a forward-invariant neural network controller for an affine dynamical system with a non-convex input set, as well as generating safe reach-avoid plans for a black-box dynamical system.
\end{abstract}
\section{Introduction}
Though neural networks have seen success in many domains, they are also well-known as ``black-box'' models, where the relationship between their inputs and outputs is not easily interpretable or analyzable due to non-linearity and high-dimensional parameterizations.
As such, it is very difficult to certify their \textit{safety} (e.g.\ satisfaction of constraints).
This limits the success in real-life training and deployment of deep reinforcement learning (RL) policies \cite{dulac2021challenges} and makes learned systems susceptible to adversarial attacks \cite{eykholt2018robust}, causing injuries and accidents.
In this paper, we present a method to enforce safety in neural networks by encouraging satisfaction of a collision-free constraint, which we applied to train forward-invariant control policies and synthesize safe motion plans for black-box systems.
An overview of our method is shown in Fig.~\ref{fig:front_figure}.

\begin{figure}[t]
\vspace*{1.7mm}
\centering
    \includegraphics[width=1\columnwidth]{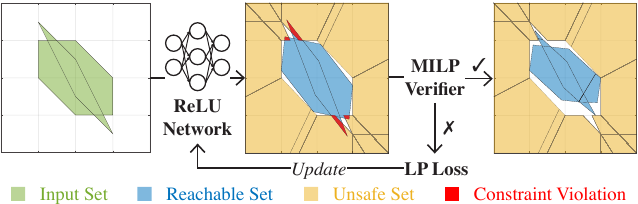}
\caption{
A flowchart of our method, using the example from Sec.~\ref{sec:nonconvex_fi}, where the goal is to make the closed-loop system forward-invariant.
Our method takes in a non-convex input set (green), then computes its \textit{exact} reachable set (blue) through the neural network.
Then, we formulate the reachable set's collision (red) with a non-convex unsafe set (yellow) as a loss function using a linear program (LP), which enables us to update the neural network's parameters via backpropagation.
The training stops once a mixed-integer linear program (MILP) verifies constraint satisfaction.
}\label{fig:front_figure}
\vspace*{-0.5cm}
\end{figure}

\subsection{Related Work}

We now review key approaches to enforce constraints on neural network, as well as relevant literature on \textit{hybrid zonotopes}, which we used in our method.

\subsubsection{Training with Constraints}\label{sec:soft_training}
Many existing works capture safety in neural networks by penalizing constraint violations on sampled \textit{points} during training \cite{gu2022review, liu2023constrained}.
While this approach is fast and easy to implement, it does not provide any safety guarantees beyond the training samples.
There are works that are capable of enforcing \textit{hard} constraints in neural networks by modifying the training process \cite{balestriero2023police}, but these can only handle affine constraints.

\subsubsection{Neural Network Verification}\label{sec:nn_verification}
A different approach is to certify safety with respect to a \textit{set} of inputs, typically by analyzing the image of the input set through a neural network \cite{ortiz2023hybrid, kochdumper2023open}.
However, most of these works only focus on \textit{verification}.
That is, they only answer the yes-no question of ``safe'' or ``unsafe'', with the aftermath of fixing an ``unsafe'' network left unexplored. 
As a result, one can only train via trial-and-error until the constraints have been satisfied, which can be slow and ineffective.

\subsubsection{Training with Verification}
A few methods combine Sec.~\ref{sec:soft_training} and Sec.~\ref{sec:nn_verification} by using set-based verification to identify counterexamples \cite{tran2020verification, tran2023verification, dai2021lyapunov}, which the training can penalize.
However, since they target only on violating \textit{points}, a ``whack-a-mole'' problem can often occur, where achieving safety on certain samples causes violation of others, trapping the algorithm in a perpetual loop.

To harvest the greatest amount of information from set-based verification, a handful of works formulated the penalty for constraint violation directly from the set representation.
However, they are either over-approximative, (which limits the space of discoverable solutions) \cite{harapanahalli2024certified, mirman2018differentiable, lin2020art, wong2018provable}, can only handle simple constraints \cite{dai2021lyapunov}, or has complexity that grows exponentially with the size of the neural network \cite{chung2021constrained, yang2022neural}.
To the best of our knowledge, none of the existing method in this category can directly handle non-convex input sets.

\subsubsection{Hybrid Zonotopes}
Recently, a non-convex polytopic set representation called hybrid zonotopes \cite{bird2023hybrid} was proposed.
Hybrid zonotopes are closed under many set operations \cite{bird2023hybrid, bird2021unions}, with extensive toolbox support \cite{koeln2024zonolab}.
They can also exactly represent the image \cite{ortiz2023hybrid} and preimage \cite{zhang2023backward} for a neural network with rectified linear units (ReLU) using basic matrix operations, with complexity scaling linearly with the network size.
However, existing works on hybrid zonotopes enforce control system safety either by formulating optimization problems without neural networks in the loop \cite{bird2021unions, bird2024set}, or only use hybrid zonotope for verification but not training \cite{ortiz2023hybrid, zhang2023backward, zhang2024hybrid, zhang2023reachability}.
In this paper, our contribution is extracting and using learning signals from neural network reachability analysis with hybrid zonotopes, enabling constraint satisfaction for non-convex input sets with exact and fast analysis.

\subsection{Contributions}
Our contributions are threefold:
\begin{enumerate}
    \item Given a non-convex input set and a non-convex unsafe region, we propose a differentiable loss function that, when trained on a ReLU neural network, encourages the image of the input set to avoid the unsafe region, the satisfaction of which can be verified by a mixed-integer linear program (MILP).
    Our key insight to achieving this is to introduce \textit{scaled hybrid zonotopes}, which enable parameterized scaling of hybrid zonotopes.
    \item We show that our method is fast and scales well with network size, dimension, and set complexity for networks with up to 240 neurons.
    The method significantly outperforms our prior work \cite{chung2021constrained}.
    \item We showcase the practicality of our method by synthesizing a \textit{forward-invariant} neural network controller for a non-convex input set, as well as safe motion plans for a parallel-parking vehicle subjected to black-box, \textit{extreme drifting} dynamics. 
\end{enumerate}
\section{Preliminaries}\label{sec:preliminaries}

We now introduce our notation for hybrid zonotopes and ReLU networks.

\subsection{Hybrid Zonotope}
A \textit{hybrid zonotope} $\hyzono(\Genc, \Genb, \ctr, \Aconc, \Aconb, \bcon) \subset \R^\ndim$ is a set parameterized by generator matrices $\Genc\in\R^{\ndim\times\ngen}$, $\Genb\in\R^{\ndim\times\nbin}$, a center $\ctr\in\R^{\ndim}$, constraint matrices $\Aconc\in\R^{\ncon\times\ngen}$, $\Aconb\in\R^{\ncon\times\nbin}$, and constraint vector $\bcon\in\R^{\nbin}$ on continous coefficients $\ccoef\in\R^{\ngen}$ and binary coefficients $\bcoef\in\{-1, 1\}^{\nbin}$ as follows \cite[Definition 3]{bird2023hybrid}:
\begin{align}\label{eq:def_hybzono}\begin{split}
    \hyzono(&\Genc, \Genb, \ctr, \Aconc, \Aconb, \bcon)\\ 
    = \{&\Genc\ccoef + \Genb\bcoef + \ctr \mid \Aconc\ccoef + \Aconb\bcoef = \bcon, \norm{\ccoef}_\infty\leq1, \\
    &\bcoef\in\{-1, 1\}^{\nbin}\}.
\end{split}\end{align}
We denote $\ngen$ as the number of continuous generators, $\nbin$ as the number of binary generators, and $\ncon$ as the number of constraints in a hybrid zonotope.

Consider a pair of hybrid zonotopes $\Ppoly_1 \subset \R^{\ndim_1}$ and $\Ppoly_2 \subset \R^{\ndim_2}$.
In this paper, we make use of their closed form expressions for 
intersections ($\Ppoly_1 \cap \Ppoly_2 = \{\xv \mid \xv \in \Ppoly_1, \xv \in \Ppoly_2\}$),
Cartesian products ($\Ppoly_1 \times \Ppoly_2 = \{\begin{bmatrix}
    \xv\tp & \yv\tp
\end{bmatrix}\tp \mid \xv \in \Ppoly_1, \yv \in \Ppoly_2\}$),
affine maps ($\Wt\Ppoly_1 + \bias = \{\Wt\xv + \bias\mid\xv\in\Ppoly_1\}$, for some mapping matrix $\Wt$ and vector $\bias$) \cite{bird2023hybrid, koeln2024zonolab},
Minkowski sums ($\Ppoly_1 \oplus \Ppoly_2 = \{\xv + \yv \mid \xv \in \Ppoly_1, \yv \in \Ppoly_2\}$) \cite{bird2023hybrid},
union ($\Ppoly_1 \cup \Ppoly_2 = \{\xv \mid \xv \in \Ppoly_1 \regtext{ or } \xv \in \Ppoly_2\}$), 
set differences ($\Ppoly_1\setminus\Ppoly_2 = \{\xv\mid\xv\in\Ppoly_1, \xv \notin \interior{\Ppoly_2}\}$, where $\interior{\cdot}$ is the interior) \cite{bird2021unions},
and intersections with hyperplanes ($\Ppoly_1 \cap \hplane = \{\xv \mid \xv \in \Ppoly_1, \xv \in \hplane\}$, where $\hplane=\{\xv\mid\Hcon\xv=\hcon\}$).

\subsection{ReLU Neural Network}
In this work, we consider a fully-connected, ReLU activated feedforward neural network $\nn:\R^{\ndim\idx{0}}\to\R^{\ndim\idx{\depth}}$, with output $\xv_\depth = \nn(\xv_0) \in \R^{\ndim\idx{\depth}}$ given an input $\xv_0 \in \R^{\ndim\idx{0}}$.
Mathematically,
\begin{subequations}
\begin{align}
    \vv\idx{\is} &= \Wt\idx{\is}\xv\idx{\is-1} + \bias\idx{\is},\\
    \xv\idx{\is} &= \max\left(\vv\idx{\is}, \zeros\right),\label{eq:nn_hiddenlayer}\\
    \xv\idx{\depth} &= \vv\idx{\depth} = \Wt\idx{\depth}\xv\idx{\depth-1} + \bias\idx{\depth},\label{eq:finallayer}
\end{align}
\end{subequations}
where $\depth\in\Natnum$, $\Wt\idx{\is} \in \R^{\ndim\idx{\is}\times \ndim\idx{\is-1}}$, $\bias\idx{\is} \in \R^{\ndim\idx{\is}}$, $\is=1,\cdots,\depth-1$, $\Wt\idx{\depth} \in \R^{\ndim\idx{\depth}\times \ndim\idx{\depth-1}}$, $\bias\idx{\depth} \in \R^{\ndim\idx{\depth}}$, and max is taken elementwise.
We denote $\Wt\idx{1}, \cdots, \Wt\idx{\depth}$ as \textit{weights}, $\bias\idx{1}, \cdots, \bias\idx{\depth}$ as \textit{biases},
$\depth$ as the \textit{depth},
and $\ndim\idx{\is}$ as the \textit{width} of the $\is^{\regtext{th}}$ layer of the network.
The function $\max(\cdot, \zeros_{\ndim\idx{\is}\times1})$ is known as an $\ndim\idx{\is}$-dimensional \textit{ReLU activation function}, and layers $1$ to $\depth-1$ are known as \textit{hidden layers}.

Consider an input hybrid zonotope $\inset\subset\R^{\ndim\idx{0}}$.
Then, its image $\outset\subset\R^{\ndim\idx{\depth}}$ for a ReLU network is exactly a hybrid zonotope \cite{ortiz2023hybrid, koeln2024zonolab}:
\begin{subequations}\label{eq:hybzono_forward}
\begin{align}
    \midset_0 &= \inset \times \ReLUgraph_{\ndim\idx{1}} \times \cdots \times \ReLUgraph_{\ndim\idx{\depth-1}},\\
    \midset_\is &= \midset_{\is-1}\cap\{\xv\mid(\Rv{\is} - \Wt\idx{\is}\Rx{\is-1})\xv=\bias\idx{\is}\},\\
    \outgraph &= \begin{bmatrix}
        \Rx{0}\\
        \Wt\idx{\depth}\Rx{\depth-1}
    \end{bmatrix}\midset_{\depth-1} + \begin{bmatrix}
        \zeros \\ \bias\idx{\depth}
    \end{bmatrix},\label{eq:hybzono_nngraph}\\
    \outset &= \begin{bmatrix}
        \zeros & \eye
    \end{bmatrix} \outgraph\label{eq:hybzono_nnout},
\end{align}
\end{subequations}
where $\is = 1, \cdots, \depth - 1$.
$\Rx{0} \in \R^{\ndim\idx{0}\times(\ndim\idx{0} + 2\ndim\idx{1} + \cdots + 2\ndim\idx{\depth-1})}$, $\Rx{\is}, \Rv{\is} \in \R^{\ndim\idx{\is}\times(\ndim\idx{0} + 2\ndim\idx{1} + \cdots + 2\ndim\idx{\depth-1})}$ are defined as:
\begin{subequations}
\begin{align}
    \Rx{0} &= \begin{bmatrix}
        \eye & \zeros
    \end{bmatrix},\\
    \Rx{\is} &= \begin{bmatrix}
        \zeros_{\ndim\idx{\is}\times(\ndim\idx{0} + 2\ndim\idx{1} + \cdots + 2\ndim\idx{\is - 1} + \ndim\idx{\is})} & \eye & \zeros
    \end{bmatrix},\\
    \Rv{\is} &= \begin{bmatrix}
        \zeros_{\ndim\idx{\is}\times(\ndim\idx{0} + 2\ndim\idx{1} + \cdots + 2\ndim\idx{\is - 1})} & \eye & \zeros
    \end{bmatrix},
\end{align}
\end{subequations}
and $\ReLUgraph_{\ndim\idx{\is}} \subset \R^{2\ndim\idx{\is}}$ is the graph of an $\ndim\idx{\is}$-dimensional ReLU activation function over a hypercube domain $\{\xv\mid-\alphas\ones\leq\xv\leq\alphas\ones\}$ with radius $\alphas$, which is exactly a hybrid zonotope \cite{zhang2023backward}:
\begin{align}\begin{split}
    \ReLUgraph_{\ndim\idx{\is}} =&\  \left\{\begin{bmatrix}
        \xv \\
        \max(\xv, \zeros)
    \end{bmatrix}\mid-\alphas\ones\leq\xv\leq\alphas\ones\right\},\\
    =&\ \hyzono\biggl(\begin{bmatrix}
        \eye\otimes\begin{bmatrix}
            -\frac{\alphas}{2} & -\frac{\alphas}{2} & 0 & 0
        \end{bmatrix}\\
        \eye\otimes\begin{bmatrix}
            0 & -\frac{\alphas}{2} & 0 & 0
        \end{bmatrix}
    \end{bmatrix}, \begin{bmatrix}
        -\frac{\alphas}{2}\eye \\
        \zeros
    \end{bmatrix}, \frac{\alphas}{2}\ones,\\
    &\ \eye\otimes\begin{bmatrix}
        \eye_2 & \eye
    \end{bmatrix}, \eye\otimes\begin{bmatrix}
        1 \\
        -1
    \end{bmatrix}, \ones\biggr),
\end{split}\end{align}
where $\otimes$ is the Kronecker product.
Note that \eqref{eq:hybzono_forward} holds as long as $\alphas$ is large enough \cite{ortiz2023hybrid}, and that $\outgraph$ is exactly the graph $\{\begin{bmatrix}
    \xv_0\tp & \nn({\xv_0})\tp
\end{bmatrix}\tp\mid\xv_0\in\inset\}$.
If $\inset$ has $\ngenin$ continuous generators, $\nbinin$ binary generators, and $\nconin$ constraints, then $\outset$ will have $\ngenin + 4\nneu$ continuous generators, $\nbinin + \nneu$ binary generators, and $\nconin + 3\nneu$ constraints \cite{zhang2023backward}, where $\nneu := \ndim\idx{1} + \cdots + \ndim\idx{\depth-1}$ denotes the \textit{number of neurons}.

\section{Problem Statement}\label{sec:problem_statement}
Our goal in this paper is to design a ReLU network training method such that the image of a given input set avoids some unsafe regions.
As per most other training methods, we assume that the depth and widths of the ReLU network are fixed, focusing only on updating the weights and biases (a.k.a.\ trainable parameters).
Mathematically, we aim to to tackle the following problem:
\begin{problem}[Training a Neural Network to Avoid Unsafe Regions]\label{prob:nn_avoid}
    Given an input set $\inset =\hyzono(\Gencin, \Genbin, \ctrin, \Aconcin, \Aconbin, \bconin)\subset \R^{\ndim_0}$, an unsafe region $\obs=\hyzono(\Gencobs, \Genbobs, \ctrobs, \Aconcobs, \Aconbobs, \bconobs)\subset \R^{\ndim_\depth}$, and a ReLU neural network $\nn$ with fixed depth $\depth$ and widths $\ndim_0, \cdots, \ndim_\depth$, find $\Wt_1, \cdots, \Wt_\depth, \bias_1, \cdots, \bias_\depth$ such that
    \begin{align}\label{eq:avoid_con}
        \collider:=\outset\cap\obs = \emptyset,
    \end{align}
    where $\outset = \{\nn(\xv_0)\mid\xv_0\in\inset\}$. 
    We call $\collider$ the collision set.
\end{problem}
A trivial solution would be to set $\Wt_\depth = \zeros$ and $\bias_\depth \notin \obs$, but this is not useful.
Moreover, trivial solutions may not exist when some of the weights and biases are locked to certain values and/or the ReLU network is subjected to additional structural constraints (such as those in Sec.~\ref{sec:app}).
Instead, we aim to design a \textit{differentiable loss function} such that \eqref{eq:avoid_con} can be achieved by updating the trainable parameters via backpropagation.
Doing so allows our method to integrate with other loss functions to achieve additional objectives and enables applications to more complicated problems.
\section{Proposed Method}\label{sec:methods}
The key component to our method is designing a loss function that quantifies how badly the reachable set of a hybrid zonotope is in collision.
To accomplish this, we first define a set representation that can grow/shrink a hybrid zonotope, design a loss function based on it, and explain how to differentiate the loss function to achieve safety.

\subsection{Scaled Hybrid Zonotopes}\label{sec:scaled_hybzono}
To mathematically represent the enlargement and shrinkage of a hybrid zonotope, we define a new set representation that we call a \textit{scaled hybrid zonotope}:
\begin{defn}[Scaled Hybrid Zonotope]\label{def:scaledzono}
    A scaled hybrid zonotope $\scaledzono(\Ppoly, \rs, \nscale) \subset \R^{\ndim}$ is parameterized by a hybrid zonotope $\Ppoly = \hyzono(\Genc, \Genb, \ctr, \Aconc, \Aconb, \bcon) \subset \R^{\ndim}$, a scaling factor $\rs\in\R_+$, and scaling index $\nscale\in\{0, \cdots, \ngen\}$ as:
    \begin{align}\label{eq:def_scaledzono}
    \begin{split}
        &\scaledzono(\Ppoly, \rs, \nscale)\\
        =& \{\Genc\ccoef + \Genb\bcoef + \ctr \mid \Aconc\ccoef + \Aconb\bcoef = \bcon, \norm{(\ccoef)_{1:\nscale}}_\infty\leq\rs, \\
    &\norm{(\ccoef)_{(\nscale+1):\ngen}}_\infty\leq1, \bcoef\in\{-1, 1\}^{\nbin}\}.
    \end{split}
    \end{align}
\end{defn}

We confirm that this representation enables scaling:
\begin{cor}[Scaling with Scaled Hybrid Zonotopes]\label{cor:hyzono_to_scaledzono}
    Let $\Ppoly\rsub = \scaledzono(\Ppoly, \rs, \nscale)$.
    For any $\nscale$, $\Ppoly\rsub \subseteq \Ppoly$ if $\rs \leq 1$, $\Ppoly\rsub = \Ppoly$ if $\rs = 1$, and $\Ppoly \subseteq \Ppoly\rsub$ if $\rs \geq 1$.
\end{cor}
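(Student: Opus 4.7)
The plan is to prove all three set-inclusions directly from the definitions by comparing the feasibility conditions on the continuous coefficient vector $\ccoef$ in \eqref{eq:def_hybzono} and \eqref{eq:def_scaledzono}. Note that $\Ppoly$ and $\Ppoly\rsub$ share the same generators $\Genc, \Genb$, center $\ctr$, constraint matrices $\Aconc, \Aconb$, and constraint vector $\bcon$, so any point in either set is realized by a pair $(\ccoef, \bcoef)$ via the same affine map $\ccoef \mapsto \Genc\ccoef + \Genb\bcoef + \ctr$. The only difference between the two sets is the bound on the first $\nscale$ entries of $\ccoef$: the bound is $1$ in $\Ppoly$ and $\rs$ in $\Ppoly\rsub$, while all remaining entries of $\ccoef$ are bounded by $1$ in both sets, and the equality constraint $\Aconc\ccoef + \Aconb\bcoef = \bcon$ together with the binary constraint $\bcoef \in \{-1,1\}^{\nbin}$ are identical.

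First I would handle the case $\rs \leq 1$: any $(\ccoef, \bcoef)$ feasible for $\Ppoly\rsub$ satisfies $\norm{(\ccoef)_{1:\nscale}}_\infty \leq \rs \leq 1$ and $\norm{(\ccoef)_{(\nscale+1):\ngen}}_\infty \leq 1$, hence $\norm{\ccoef}_\infty \leq 1$, which is exactly the feasibility condition for $\Ppoly$. Thus the corresponding point lies in $\Ppoly$, giving $\Ppoly\rsub \subseteq \Ppoly$. Next, for $\rs \geq 1$, any $(\ccoef, \bcoef)$ feasible for $\Ppoly$ satisfies $\norm{\ccoef}_\infty \leq 1 \leq \rs$, so in particular $\norm{(\ccoef)_{1:\nscale}}_\infty \leq \rs$ and $\norm{(\ccoef)_{(\nscale+1):\ngen}}_\infty \leq 1$, which is exactly the feasibility condition for $\Ppoly\rsub$. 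Hence $\Ppoly \subseteq \Ppoly\rsub$. The case $\rs = 1$ then follows immediately by applying both inclusions.

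There is essentially no obstacle here; the statement is a direct consequence of Definition~\ref{def:scaledzono}, since scaling only relaxes or tightens a subset of the infinity-norm bounds on $\ccoef$ while leaving the rest of the hybrid zonotope structure untouched. The only mild subtlety worth noting explicitly is that the claim holds for \emph{any} choice of the scaling index $\nscale \in \{0, \ldots, \ngen\}$, including the boundary cases $\nscale = 0$ (in which $\Ppoly\rsub = \Ppoly$ regardless of $\rs$) and $\nscale = \ngen$ (in which every continuous coefficient is scaled); these are handled uniformly by the same argument above, with the convention that an empty-indexed norm imposes no constraint.
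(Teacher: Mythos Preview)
Your proposal is correct and takes essentially the same approach as the paper, which simply states that the result follows from the definitions \eqref{eq:def_hybzono} and \eqref{eq:def_scaledzono}; you have merely spelled out explicitly the comparison of the $\infty$-norm bounds on $\ccoef$ that the paper leaves implicit.
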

\begin{proof}
    This follows from \eqref{eq:def_hybzono} and \eqref{eq:def_scaledzono}.
\end{proof}
\noindent We say that the scaled hybrid zonotope $\Ppoly\rsub$ is the hybrid zonotope $\Ppoly$ scaled by $\rs$ in the first $\nscale$ dimensions.

Note that a similar notion of \textit{scaled zonotopes} exists \cite{yang2021scalable}, where the scaling is performed on the generator matrix.
Instead, we scale the upper bound of the infinity norm, which enables optimization over the scaling of the hybrid zonotope, as will be shown in Sec.~\ref{sec:hybzono_diff}.

We now apply scaled hybrid zonotopes to Problem \ref{prob:nn_avoid}.
To proceed, we first cast the input set $\inset$ as $\inset\rsub = \scaledzono(\inset, \rs, \nscale)$ for some chosen $\nscale$.
We leave $\rs$ as a variable to be optimized.
Note that one can scale any subset of $\{1, \cdots, \ngenin\}$ by rearranging the columns in $\Gencin$ and $\Aconcin$.
While Corollary \ref{cor:hyzono_to_scaledzono} and our analysis hold true for any $\nscale$, we observed that different choices can affect the behavior of the scaling, which can impact the performance of our method.
We discuss one such scenario in Sec.~\ref{sec:nonconvex_fi}.

The following lemma and proposition construct the graph and image of a scaled hybrid zonotope for a ReLU network.
\begin{lem}[Operations on Scaled Hybrid Zonotopes]\label{lem:scaledzono_op}
    Consider hybrid zonotopes ${\Ppoly}_1$, $\Ppoly_2$, and collection of hyperplanes $\hplane$.
    If ${\Ppoly\rsub}_1 = \scaledzono(\Ppoly_1, \rs, \nscale)$, then:
    \begin{align}
        {\Ppoly\rsub}_1 \times \Ppoly_2 &= \scaledzono(\Ppoly_1\times\Ppoly_2, \rs, \nscale),\label{eq:cartprod_scaledzono}\\
        {\Ppoly\rsub}_1\cap\hplane &= \scaledzono(\Ppoly_1\cap\hplane, \rs, \nscale),\label{eq:intplane_scaledzono}\\
        \Wt{\Ppoly\rsub}_1 + \bias &=  \scaledzono(\Wt\Ppoly_1 + \bias, \rs, \nscale),\ \regtext{and}\label{eq:affinemap_scaledzono}\\
        {\Ppoly\rsub}_1 \cap \Ppoly_2 &= \scaledzono(\Ppoly_1 \cap \Ppoly_2, \rs, \nscale)\label{eq:int_scaledzono}.
    \end{align}
\end{lem}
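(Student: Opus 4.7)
The plan is to prove each of the four identities in turn by writing each operation in the canonical hybrid‑zonotope form and then observing that, in every case, the continuous coefficient vector of the resulting hybrid zonotope has the continuous coefficients of $\Ppoly_1$ sitting in its first $\ngen_1$ entries, so that ``restrict the first $\nscale$ coefficients to have infinity norm at most $\rs$'' commutes with the operation. Each identity will be verified by expanding both sides using \eqref{eq:def_hybzono} and \eqref{eq:def_scaledzono} and matching the constraint on the coefficients.

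First, for \eqref{eq:cartprod_scaledzono}, I would recall the block‑diagonal construction of the Cartesian product from \cite{bird2023hybrid}, in which the continuous coefficient vector of $\Ppoly_1 \times \Ppoly_2$ is the stacking $[\ccoef_1\tp,\ \ccoef_2\tp]\tp$ with $\ccoef_1$ in the first $\ngen_1$ entries. Then scaling the first $\nscale \le \ngen_1$ entries on the right‑hand side of \eqref{eq:cartprod_scaledzono} acts on the same coordinates of $\ccoef_1$ as on the left, while the constraint $\norm{\ccoef_2}_\infty\le 1$ and the binary constraints are untouched on both sides, so the two membership conditions coincide.

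For \eqref{eq:intplane_scaledzono} and \eqref{eq:affinemap_scaledzono}, the canonical formulas for intersection with a hyperplane and for affine maps \cite{bird2023hybrid, koeln2024zonolab} leave the continuous and binary coefficient vectors (and their index ordering) unchanged: the hyperplane intersection only appends a row to $\Aconc$, $\Aconb$, $\bcon$, and the affine map only left‑multiplies $\Genc$, $\Genb$, $\ctr$ by $\Wt$ (and shifts by $\bias$). Since the coefficient indices and their $\infty$‑norm constraints are preserved, the scaling of the first $\nscale$ continuous coefficients commutes with both operations, yielding the two identities.

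The main obstacle is \eqref{eq:int_scaledzono}, because the hybrid‑zonotope intersection concatenates continuous generators and introduces equality constraints coupling $\ccoef_1$ and $\ccoef_2$. I would use the construction from \cite{bird2023hybrid} in which $\Ppoly_1\cap\Ppoly_2$ has continuous coefficient vector $[\ccoef_1\tp,\ \ccoef_2\tp]\tp$ with $\ccoef_1$ again in the first $\ngen_1$ slots, and the coupling $\Genc_1\ccoef_1 - \Genc_2\ccoef_2 = \ctr_2 - \ctr_1 - \Genb_1\bcoef_1 + \Genb_2\bcoef_2$ appended to the equality constraints; scaling the first $\nscale$ entries targets only $\ccoef_1$ exactly as on the left‑hand side, so membership in $\scaledzono(\Ppoly_1\cap\Ppoly_2,\rs,\nscale)$ is equivalent to $\xv\in\Ppoly_2$ together with $\xv\in\scaledzono(\Ppoly_1,\rs,\nscale)=\Ppoly\rsub_1$. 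Assembling the four cases completes the proof; I would close with a brief remark that all four identities rely only on the invariance of the first $\nscale$ coefficient indices under the relevant constructions, which explains why the scaling operation distributes across these set operations.
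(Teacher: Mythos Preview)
Your proposal is correct and follows essentially the same reasoning as the paper's proof, which is a one-sentence sketch noting that the hybrid-zonotope constructions in \cite{bird2023hybrid} neither depend on the value of the $\infty$-norm bound nor introduce new continuous coefficients ahead of the first $\nscale$ indices. You have simply unpacked this observation case by case, which is a sound and more explicit version of the same argument.
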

\begin{proof}
    This follows from the derivations for these hybrid zonotope operations \cite{bird2023hybrid} not depending on the bound of the infinity norm and not incurring additional continuous coefficients with scaled bounds.
\end{proof}
\begin{prop}[Image and Graph of Scaled Hybrid Zonotopes for ReLU Networks]\label{prop:scaledzono_reach}
    Compute $\outgraph$ and $\outset$ with \eqref{eq:hybzono_forward} for $\inset$.
    Let $\inset\rsub = \scaledzono(\inset, \rs, \nscale)$.
    Then,
    \begin{align}
        {\outgraph}\rsub &= \scaledzono(\outgraph, \rs, \nscale)\ \regtext{and}\\
        {\outset}\rsub &= \scaledzono(\outset, \rs, \nscale),
    \end{align}
    where ${\outgraph}\rsub = \{\begin{bmatrix}
    \xv_0\tp & \nn({\xv_0})\tp
\end{bmatrix}\tp\mid\xv_0\in\inset\rsub\}$ and ${\outset}\rsub = \{\nn({\xv_0}) \mid \xv_0\in\inset\rsub\}$ is the graph and image of $\inset\rsub$ for $\nn$.
\end{prop}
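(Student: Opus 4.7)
The plan is to walk through the constructive formula \eqref{eq:hybzono_forward} step by step, substituting $\inset\rsub$ for $\inset$ and invoking Lemma~\ref{lem:scaledzono_op} at each stage to verify that the scaling parameters $(\rs, \nscale)$ are transported unchanged through every operation. Since \eqref{eq:hybzono_forward} builds $\outgraph$ and $\outset$ as a composition of a Cartesian product with ReLU-graph hybrid zonotopes, a sequence of hyperplane intersections, and two affine maps --- and Lemma~\ref{lem:scaledzono_op} covers exactly these operations --- the result reduces to an induction on the layer index.

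Concretely, I would first apply \eqref{eq:cartprod_scaledzono} iteratively to the Cartesian product defining $\midset_0$, with $\inset$ replaced by $\inset\rsub$, to obtain
\begin{equation*}
\inset\rsub \times \ReLUgraph_{\ndim\idx{1}} \times \cdots \times \ReLUgraph_{\ndim\idx{\depth-1}} = \scaledzono\!\left(\midset_0,\, \rs,\, \nscale\right).
\end{equation*}
Next I would induct on $\is = 1, \ldots, \depth - 1$: assuming the scaled partial graph at layer $\is-1$ equals $\scaledzono(\midset_{\is-1}, \rs, \nscale)$, the update for $\midset_\is$ is an intersection with a collection of hyperplanes, so \eqref{eq:intplane_scaledzono} transports the scaling to layer $\is$. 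Once the induction terminates, \eqref{eq:hybzono_nngraph} is an affine map, so \eqref{eq:affinemap_scaledzono} yields ${\outgraph}\rsub = \scaledzono(\outgraph, \rs, \nscale)$, and one more application of \eqref{eq:affinemap_scaledzono} to the selector $\begin{bmatrix}\zeros & \eye\end{bmatrix}$ in \eqref{eq:hybzono_nnout} gives ${\outset}\rsub = \scaledzono(\outset, \rs, \nscale)$.

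The only subtlety that requires care is the bookkeeping of the scaling index $\nscale$ across the Cartesian products: because $\inset\rsub$ places its scaled continuous coefficients in positions $1$ through $\nscale \leq \ngenin$, and each subsequent Cartesian product with a $\ReLUgraph_{\ndim\idx{\is}}$ appends new continuous coefficients \emph{after} those of $\inset\rsub$, the scaled coordinates remain at positions $1$ through $\nscale$ of $\midset_0$. The hyperplane intersections and affine maps that follow only add equality constraints or remap outputs without reordering continuous coefficients, so $\nscale$ is preserved throughout. One should also check that the hypercube radius $\alphas$ chosen for each $\ReLUgraph_{\ndim\idx{\is}}$ remains large enough to bound the pre-activations over all of $\inset\rsub$ (potentially a stricter requirement than for $\inset$ alone when $\rs > 1$), but apart from this technicality the proof is a direct, mechanical chase of \eqref{eq:hybzono_forward} through Lemma~\ref{lem:scaledzono_op}, and I do not expect any deeper obstacle.
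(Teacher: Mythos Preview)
Your proposal is correct and follows essentially the same approach as the paper, which simply observes that \eqref{eq:hybzono_forward} is composed solely of the operations \eqref{eq:cartprod_scaledzono}, \eqref{eq:intplane_scaledzono}, and \eqref{eq:affinemap_scaledzono} from Lemma~\ref{lem:scaledzono_op}. Your version is in fact more explicit than the paper's one-line proof, particularly in tracking that the scaled coefficients stay in positions $1$ through $\nscale$ because Cartesian products append new continuous generators after existing ones, and in flagging the technicality about $\alphas$ when $\rs > 1$.
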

\begin{proof}
    This follows from Lemma \ref{lem:scaledzono_op}, since \eqref{eq:hybzono_forward} consists of only the operations \eqref{eq:cartprod_scaledzono}, \eqref{eq:intplane_scaledzono}, \eqref{eq:affinemap_scaledzono}, and is valid for any bounded polytopic representations \cite{ortiz2023hybrid}.
\end{proof}

Proposition \ref{prop:scaledzono_reach} implies that we can quickly compute the scaled image and graph of a scaled input set by simply changing $\rs$.
Moreover, from its proof, this property also holds when the image or graph is further subject to any of the operations in Lemma \ref{lem:scaledzono_op}.
For example, since $\collider = \outset\cap\obs$ from \eqref{eq:avoid_con}, we have $\collider\rsub := {\outset}\rsub \cap \obs = \scaledzono(\collider, \rs, \nscale)$.
We leverage these properties for our following analyses.

\subsection{Hybrid Zonotope Emptiness Check}\label{sec:hybzono_empty}
Before constructing a loss function for training, we first need a way to check if $\eqref{eq:avoid_con}$ is true.
From \eqref{eq:hybzono_forward}, we can compute $\collider$ as a hybrid zonotope $\hyzono(\Genccol, \Genbcol, \ctrcol, \Aconccol, \Aconbcol, \bconcol)$ (with $\ngencol = \ngenin + \ndim\idx{0} + 4\nneu + \ngenobs$ continuous generators, $\nbincol = \nbinin + \nneu + \nbinobs$ binary generators, and $\nconcol = \nconin + \ndim\idx{0} + 3\nneu + \ndim\idx{\depth}$ constraints, where $\ngenobs$, $\nbinobs$, and $\nconobs$ are the number of continuous generators, binary generators, and constraints for $\obs$ respectively).
To check whether $\collider$ is empty, existing methods would formulate a feasibility MILP with $\ngencol$ continuous variables and $\nbincol$ binary variables \cite{bird2023hybrid}:
\begin{align}\label{eq:empty_milp_naive}\begin{split}
    \regtext{find}\ & \ccoef, \bcoef,\\
    \st & \Aconccol\ccoef + \Aconbcol\bcoef = \bconcol,\\
    & \norm{\ccoef}_\infty\leq1,\\
    & \bcoef\in\{-1, 1\}^{\nbin},
\end{split}\end{align}
which is infeasible iff $\collider=\emptyset$.
Note that \eqref{eq:empty_milp_naive} is NP-complete \cite{achterberg2020presolve}.
However, \eqref{eq:empty_milp_naive} is a feasibility program and provides no measure of the degree of emptiness of $\collider$.
The optimizers also do not exist when \eqref{eq:empty_milp_naive} is infeasible (i.e.\ the image is in collision with the unsafe region).
Thus, it is unclear how a loss function can be derived from \eqref{eq:empty_milp_naive} to drive $\collider$ empty.

Instead, using scaled hybrid zonotopes, we formulate the following MILP with one more continuous variable than \eqref{eq:empty_milp_naive} to enable a differentiable emptiness check:
\begin{thm}[Scalable Emptiness Check]
    Given a hybrid zonotope $\collider = \hyzono(\Genccol, \Genbcol, \ctrcol, \Aconccol, \Aconbcol, \bconcol)$ and $\nscale$.
    Consider the following MILP with the same constraints as $\collider\rsub = \scaledzono(\collider, \rs, \nscale)$:
    \begin{align}\label{eq:empty_milp}\begin{split}
        \min\ & \rs,\\
        \st & \Aconccol\ccoef + \Aconbcol\bcoef = \bconcol,\\
        & \norm{(\ccoef)_{1:\nscale}}_\infty\leq\rs,\\
        & \norm{(\ccoef)_{(\nscale+1):\ngencol}}_\infty\leq 1,\\
        & \bcoef\in\{-1, 1\}^{\nbincol},
    \end{split}\end{align}
    where $\rs\in\R_+$, $\ngencol$ and $\nbincol$ are the number of continuous and binary generators in $\collider$ (and $\collider\rsub$).
    Suppose $\rs\opt$ is the optimal value of \eqref{eq:empty_milp}.
    Then, $\collider = \emptyset$ iff $\rs\opt > 1$.
\end{thm}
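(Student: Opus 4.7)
The plan is to prove the two directions of the biconditional by relating a feasible point of the MILP \eqref{eq:empty_milp} with scaling factor $\rs \leq 1$ directly to a point of $\collider$ (in the unscaled sense \eqref{eq:def_hybzono}), leveraging the fact that the partial infinity-norm bound $\norm{(\ccoef)_{1:\nscale}}_\infty \leq \rs$ collapses to the standard bound $\norm{\ccoef}_\infty \leq 1$ precisely when $\rs \leq 1$.

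For the ``$\Leftarrow$'' direction, I would argue the contrapositive: suppose $\collider \neq \emptyset$, so there exist $\ccoef, \bcoef$ satisfying $\Aconccol\ccoef + \Aconbcol\bcoef = \bconcol$, $\norm{\ccoef}_\infty \leq 1$, and $\bcoef \in \{-1,1\}^{\nbincol}$. Then the triple $(\ccoef, \bcoef, \rs = 1)$ is feasible for \eqref{eq:empty_milp}, because the first $\nscale$ coordinates of $\ccoef$ satisfy $\norm{(\ccoef)_{1:\nscale}}_\infty \leq 1 = \rs$ and the remaining coordinates satisfy $\norm{(\ccoef)_{(\nscale+1):\ngencol}}_\infty \leq 1$. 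Hence $\rs\opt \leq 1$, contradicting $\rs\opt > 1$.

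For the ``$\Rightarrow$'' direction, I would again argue by contrapositive: suppose $\rs\opt \leq 1$. Then \eqref{eq:empty_milp} is feasible and admits some optimizer $(\ccoef\opt, \bcoef\opt)$ with $\norm{(\ccoef\opt)_{1:\nscale}}_\infty \leq \rs\opt \leq 1$ and $\norm{(\ccoef\opt)_{(\nscale+1):\ngencol}}_\infty \leq 1$. Combining these two bounds yields $\norm{\ccoef\opt}_\infty \leq 1$, so $(\ccoef\opt, \bcoef\opt)$ satisfies exactly the defining constraints of $\collider$ in \eqref{eq:def_hybzono}. Thus $\Genccol \ccoef\opt + \Genbcol \bcoef\opt + \ctrcol \in \collider$, and in particular $\collider \neq \emptyset$. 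The edge case of an infeasible MILP can be handled by the standard convention $\rs\opt = +\infty > 1$, which is consistent with $\collider = \emptyset$: any point of $\collider$ would furnish a feasible triple as in the first direction.

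I do not anticipate a major obstacle here; the proof is essentially a bookkeeping argument built on Definition \ref{def:scaledzono} and the observation that $\scaledzono(\collider, 1, \nscale) = \collider$ for any $\nscale$ (a special case of Corollary \ref{cor:hyzono_to_scaledzono}). The only subtlety worth flagging is that the MILP is well-posed and attains its infimum whenever feasible, since the feasible set is a closed bounded set intersected with a finite discrete set of binary assignments, and the objective $\rs$ is linear and bounded below by $0$.
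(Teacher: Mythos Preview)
Your proposal is correct and matches the paper's approach: the paper's proof is a single line stating that the claim ``follows directly from the definition of hybrid zonotopes in \eqref{eq:def_hybzono},'' and your two contrapositive arguments are exactly the unpacking of that sentence. One minor imprecision: the feasible set of \eqref{eq:empty_milp} is not bounded (neither $\rs$ nor $(\ccoef)_{1:\nscale}$ is bounded a priori), but attainment still holds because, for each fixed $\bcoef$, the objective $\rs \geq \norm{(\ccoef)_{1:\nscale}}_\infty$ is coercive on the closed feasible slice; this does not affect your biconditional argument.
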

\begin{proof}
    This follows directly from the definition of hybrid zonotopes in \eqref{eq:def_hybzono}.
\end{proof}

Since $\nscale$ of $\collider\rsub$ corresponds only to continuous coefficients within $\inset\rsub$, from Corollary \ref{cor:hyzono_to_scaledzono}, $\rs\opt$ can be intuitively understood as \textit{the maximum amount $\inset$ can decrease in size (or the minimum amount $\inset$ can increase in size) such that its image $\outset$ is still in collision with $\obs$}.
By construction, \eqref{eq:empty_milp} is feasible as long as $\collider \neq \emptyset$ (i.e.\ the image is in collision).
Thus, we have a measure in the degree of collision, with which we can construct a loss function to encourage $\collider$ to be empty.
We illustrate this concept in Fig.~\ref{fig:before_after_training}.

\begin{figure}[t]
\vspace*{1.7mm}
\centering
    \includegraphics[width=1\columnwidth]{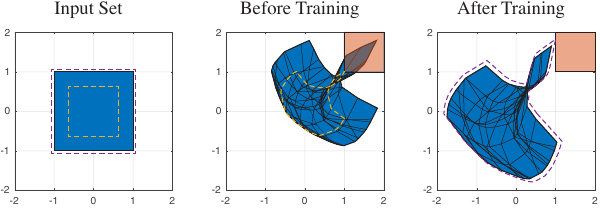}
\caption{
Results of our method in Sec.~\ref{sec:experiments} with 1 hidden layer of width 30.
$\inset$ and $\outset$ (input and image) are labelled in blue.
${\inset}\rsub$ and ${\outset}\rsub$ with $\rs=\rs\opt$ (scaled input and image) are shown for before (yellow) and after (purple) training.
This illustrates the concept that our method is minimizing the amount the input set needs to be shrunk (or maximizing the amount input set needs to be enlarged) before the image \textit{just} touches the unsafe set.
}
\label{fig:before_after_training}
\vspace*{-0.5cm}
\end{figure}

We note that \cite{chung2021constrained} employs a similar tactic to measure and train on the ``emptiness'' of constrained zonotopes \cite{scott2016constrained}.
In that work, the constraint by $\rs$ is placed on all continuous coefficients on $\collider$, which makes $\rs\opt$ less interpretable and loses flexibility in choosing a better $\nscale$.

\subsection{Loss Function to Encourage Emptiness}\label{sec:hybzono_diff}
We now construct a loss function which, when minimized, makes $\collider$ empty.
Since $\collider = \emptyset$ iff $\rs\opt > 1$, we can construct the loss function as $\ell = 1 - \rs\opt$, such that when $\ell$ is negative, we must have $\collider = \emptyset$.

To minimize $\ell$ using backpropagation, we need, from chain rule,  $\frac{\partial\ell}{\partial\rs\opt}$, $\frac{\partial\rs\opt}{\partial\Aconccol}$, $\frac{\partial\rs\opt}{\partial\Aconbcol}$, $\frac{\partial\rs\opt}{\partial\bconcol}$, $\frac{\partial\Aconccol}{\partial\Wt_1}$, $\cdots$, $\frac{\partial\bconcol}{\partial\Wt_\depth}$, and $\frac{\partial\Aconccol}{\partial\bias_1}$, $\cdots$, $\frac{\partial\bconcol}{\partial\bias_\depth}$.
While $\frac{\partial\ell}{\partial\rs\opt}$, $\frac{\partial\Aconccol}{\partial\Wt_1}$, $\cdots$, $\frac{\partial\bconcol}{\partial\bias_\depth}$ can be readily obtained from automatic differentiation \cite{paszke2019pytorch}, obtaining $\frac{\partial\rs\opt}{\partial\Aconccol}$, $\frac{\partial\rs\opt}{\partial\Aconbcol}$, and $\frac{\partial\rs\opt}{\partial\bconcol}$ involves differentiation through the MILP.
Since a MILP's optima can remain unchanged under small differences in its parameters, its gradient can be 0 or non-existent, which are uninformative \cite{hu2024two}.
Instead, consider the following convex relaxation of \eqref{eq:empty_milp}:
\begin{align}\label{eq:empty_lp}\begin{split}
    \min\ & \surrr - \mus(\ones\logn(\begin{bmatrix}
        \surrzco\tp & \surrzct\tp & \surrzcth\tp & \surrzb\tp & \surrr & \sv\tp
    \end{bmatrix}\tp)),\\
    \st & \Aconccol \begin{bmatrix}
        \surrzco - \surrzct\\
        2\surrzcth - \ones
    \end{bmatrix} + \Aconbcol(2\surrzb - \ones) = \bconcol,\\
    &\begin{bmatrix}
        \surrzco - \surrzct - \surrr\ones\\
        \surrzct - \surrzco - \surrr\ones\\
        \surrzcth\\
        \surrzb
    \end{bmatrix} + \sv = \begin{bmatrix}
        \zeros_{\nscale\times1}\\
        \zeros_{\nscale\times1}\\
        \ones_{(\ngencol-\nscale)\times1}\\
        \ones
    \end{bmatrix},
\end{split}\end{align}
where $\surrr \in \R_+$, $\surrzco \in \R^{\nscale}_+$, $\surrzct \in \R^{\nscale}$, $\surrzcth \in \R^{\ngencol-\nscale}$, $\surrzb \in \R^{\nbincol}_+$, $\sv \in \R^{\nscale+\ngencol+\nbincol}_+$, $\mus \in \R_+$ is the cut-off multiplier from the solver \cite{mandi2020interior}, and $\logn(\cdot)$ is applied elementwise.
Essentially, \eqref{eq:empty_lp} is the standard linear program (LP) form of \eqref{eq:empty_milp} with log-barrier regularization and without the integrality constraints, which be obtained by replacing $\rs$ with $\surrr$, $\ccoef$ with $\begin{bmatrix}
    (\surrzco - \surrzct)\tp & 2\surrzcth\tp - \ones
\end{bmatrix}\tp$, and $\bcoef$ with $2\surrzb - \ones$ (such that all constraints are non-negative), and introducing slack variable $\sv$ (such that inequality constraints become equality constraints).

The optimization problem \eqref{eq:empty_lp} can be solved quickly using solvers such as IntOpt \cite{mandi2020interior}.
Moreover, if ${\surrr}\opt$ is the optimal value of \eqref{eq:empty_lp}, $\frac{\partial{\surrr}\opt}{\partial\Aconc}$, $\frac{\partial{\surrr}\opt}{\partial\Aconb}$, and $\frac{\partial{\surrr}\opt}{\partial\bcon}$ can be obtained by differentiating the Karush-Kuhn-Tucker (KKT) conditions of \eqref{eq:empty_lp}, for which we refer the readers to \cite[Appendix B]{hu2024two} for the mathematical details.
Not only are these gradients well-defined, easily computable, and informative, but they have also been shown to outperform other forms of convex relaxation in computation speed and minimizing loss functions derived from MILPs \cite[Appendix E]{hu2024two}.

Therefore, instead of the loss function $\ell$, we propose to backpropagate with respect to a surrogate loss function $\surrl = 1 - \surrr\opt$, where $\surrr\opt$ is the optimal value of \eqref{eq:empty_lp}.
Since \eqref{eq:empty_lp} is the relaxation of \eqref{eq:empty_milp}, we have $\surrr\opt \leq \rs\opt$ and therefore $\ell \leq \surrl$ as $\mus \to 0$.
Thus for a small enough $\mus$, $\surrl < 0$ is a sufficient condition for $\collider = \emptyset$.
In practice, $\mus$ should not be set too small for better convergence to the MILP's optimal solution \cite{hu2024two}, and $\collider = \emptyset$ can be achieved much earlier before $\surrl < 0$.
Thus, we solve \eqref{eq:empty_milp_naive} or use any other neural network verification tool (see Sec.~\ref{sec:nn_verification}) in between some iterations of training with $\surrl$ to check whether \eqref{eq:avoid_con} has been achieved.
In other words, our approach is \textit{modular} to existing neural network verification frameworks.
Once \eqref{eq:avoid_con} has been verified as true, the training is complete and Problem \ref{prob:nn_avoid} has been solved.

\section{Experiments}\label{sec:experiments}
We now assess the scalability of our method with different network sizes.
We also compare our results with \cite{chung2021constrained} to assess our contribution.
All experiments were performed using Python on a desktop computer with a 24-core i9 CPU, 32 GB RAM, and an NVIDIA RTX 4090 GPU.

\subsection{Experiment Setup and Method}
We use the experiment setup in \cite{chung2021constrained} to benchmark our method.
First, we pretrain ReLU networks of different sizes (listed in Table \ref{table:experiments}) to approximate $\vc{f}(\xv) = \begin{bmatrix}
        (\xv)_1^2 + \sin((\xv)_2)&
        (\xv)_2^2 + \sin((\xv)_1)
    \end{bmatrix}\tp$ using mean squared error (MSE) loss and the Adam optimizer \cite{kingma2014adam} in PyTorch \cite{paszke2019pytorch}.

We define the input set as $\inset = \hyzono\left(\eye, [], \zeros, [], [], []\right)$ and the unsafe set as $\obs = \hyzono\left(0.5\eye, [], \begin{bmatrix}
        1.5&
        1.5
    \end{bmatrix}\tp, [], [], []\right)$ such that the image of $\inset$ would intersect with $\obs$ after pretraining, as shown in Fig.~\ref{fig:before_after_training}.
We note that $\inset$ and $\obs$ are convex for fair comparison with \cite{chung2021constrained}, which can only handle convex sets.
We will showcase our method against non-convex sets and neural network with multiple layers in Sec.~\ref{sec:app}.

Given the pretrained network, we begin training to obey the safety constraint.
In each training iteration, we use IntOpt \cite{mandi2020interior} with $\mus=0.1$ and \cite{hu2024two} to compute the value and gradient of the loss function $\surrl$ with $\nscale=2$ and $\alphas=50$.
Then, we use PyTorch \cite{paszke2019pytorch} with the Adam optimizer \cite{kingma2014adam} (with a learning rate of $0.02$) to update the trainable parameters in the network.
Every 5 iterations, we use Gurobi \cite{gurobi2021gurobi} to solve the MILP in \eqref{eq:empty_milp_naive} to check the emptiness of $\collider$.
We are successful in solving Problem \ref{prob:nn_avoid} if $\collider=\emptyset$, at which point we terminate the training instead of updating the parameters.

For \cite{chung2021constrained}, we implemented the method using the same optimizer and learning rate.
To ensure fairness, we do not include the objective loss and only add the constraint loss when it is positive.
We terminate the training once the constraint loss has reached zero.
We ran each experiment only once due to the excessive computation time of \cite{chung2021constrained}.

\begin{table}[t]
\vspace*{1.7mm}
\captionsetup{font=small}
\centering
\begin{tabular}{r|r|r|r|r}
\multicolumn{1}{c|}{Hidden} & \multicolumn{1}{c|}{Total} & \multicolumn{1}{c|}{Training} & \multicolumn{1}{c|}{MILP Time} & \multicolumn{1}{c}{Total}\\
\multicolumn{1}{c|}{Layer} & \multicolumn{1}{c|}{Time} & \multicolumn{1}{c|}{Time} & \multicolumn{1}{c|}{(\unit{s/}5 Itera-} &\multicolumn{1}{c}{Itera-}\\
\multicolumn{1}{c|}{Widths} & \multicolumn{1}{c|}{(\unit{s})} & \multicolumn{1}{c|}{(\unit{s/}Iteration)} & \multicolumn{1}{c|}{tions)} & \multicolumn{1}{c}{tions}\\
\hline
\multicolumn{5}{c}{Ours}\\
\hline
$(10)$ & $0.081$ & $0.003\pm0.001$ & $0.004\pm0.001$ & 25\\
$(20)$ & $0.111$ & $0.011\pm0.002$ & $0.007\pm0.003$ & 10\\
$(30)$ & $0.107$ & $0.025\pm0.003$ & $0.008\pm0.000$ & 5\\
$(60)$ & $2.767$ & $0.140\pm0.005$ & $0.026\pm0.001$ & 20\\
$(120)$ & $11.046$ & $1.205\pm0.023$ & $0.101\pm0.000$ & 10\\
$(240)$ & $47.855$ & $11.885\pm0.180$ & $0.317\pm0.000$ & 5\\
$(120, 120)$ & $47.444$ & $11.755\pm0.144$ & $0.423\pm0.000$ & 5\\
$(80, 80, 80)$ & $48.722$ & $11.832\pm0.262$ & $1.396\pm0.000$ & 5\\
\hline
\multicolumn{5}{c}{\cite{chung2021constrained}}\\
\hline
$(10)$ & $3.727$ & $0.373\pm0.016$ & N/A & 10\\
$(20)$ & $1{,}597.947$ & $319.589\pm1.987$ & N/A & 5\\
$(30)$ & Timeout & Timeout & N/A & N/A
\end{tabular}
\caption{Comparison of \cite{chung2021constrained} and our method's computation time needed to train the image of an input set for a ReLU network out of collision with an unsafe region.
For our method, we separately report the time needed to verify constraint satisfaction via solving a MILP, as it can be replaced with other verification methods.
}
\label{table:experiments}
\vspace*{-0.5cm}
\end{table}

\subsection{Hypotheses}\label{sec:exp_hypo}
Since the complexity in representing $\collider$ scales with the number of neurons linearly for our method \cite{ortiz2023hybrid} and exponentially for \cite{chung2021constrained}, we expect our method to significantly outperform \cite{chung2021constrained}, especially on larger networks.
We expect three operations to dominate the computation time: solving for the relaxed LP \eqref{eq:empty_lp}, which has a complexity of $\bigo(\bitl\sqrt{\ngencol+\nbincol+\nscale})$ ($\bitl$ is the bit length of the input data) \cite{wright1997primal}, solving for the MILP \eqref{eq:empty_milp_naive}, which has a worst-case complexity of $\bigo(2^{\nbincol})$ \cite{achterberg2020presolve}, and solving for the gradient of \eqref{eq:empty_lp}, which has a complexity of $\bigo((\ngencol+\nbincol+\nscale)^3)$ due to multiple matrix inverse operations \cite{hu2024two}.

\subsection{Results and Discussion}
We report the computation time of our experiments in Table \ref{table:experiments} and visualize a result in Fig.~\ref{fig:before_after_training}.
All images have been successfully driven out of the unsafe region, except \cite{chung2021constrained} for networks with more than 30 neurons, which failed to compute the images in a reasonable time.

As expected, our method scales far better than \cite{chung2021constrained} with the size of the neural network.
Surprisingly, solving for the MILP \eqref{eq:empty_milp_naive} did not take excessive time despite having a bad worst-case complexity, possibly due to further optimization from within Gurobi \cite{gurobi2021gurobi}.
The most time-consuming component of our method comes from computing the gradient of \eqref{eq:empty_lp}, which takes up $38.5\%$ of the training time for $\ndim=10$ and $97.9\%$ of the training time for $\ndim=240$.
To remedy this, we tried using a least-squares methods instead of directly solving for the matrix inverses, but did not observe any speed-up.
This implies that it may be more beneficial to increase the frequency of verification for problems with larger networks to reduce the overall computation time.

\section{Applications}\label{sec:app}
In this section, we demonstrate the utility of our method on problems in decision and control of robotic systems.
We used the same computing hardware and optimization packages as in Sec.~\ref{sec:experiments}.

\subsection{Forward Invariance for Non-Convex Safe Regions}\label{sec:nonconvex_fi}

A forward-invariant system is a well-known notion to certify infinite-time safe behavior.
However, to the best of our knowledge, only \cite{harapanahalli2024certified} has shown to be able to handle systems with neural network controllers in the loop---though it can only handle \textit{convex} sets.
Instead, we seek to use our method to learn forward-invariant neural network controllers for non-convex regions for the first time.

\subsubsection{Problem Statement}
Formally, we tackle the following:
\begin{problem}[Training a Forward-Invariant Controller]
    Consider an affine dynamical system $\xv_{\ts + \Delta\ts} = \dynmat \begin{bmatrix}
        \xv_{\ts}\tp & \uv_{\ts}\tp
    \end{bmatrix}\tp + \dynvec$, where $\xv_{\ts} \in \R^{\nstate}$ and $\uv_{\ts} \in \R^{\nctrl}$ are the states and control inputs of the current timestep, $\dynmat \in \R^{\nstate\times(\nstate+\nctrl)}$ and $\dynvec\in\R^{\nstate}$ characterizes the affine dynamics, and $\xv_{\ts+\Delta\ts} \in \R^{\nstate}$ is the states at the next timestep.
    Given an input set $\insetnow\subset\R^{\nstate}$, which is also the target safe set, train a neural network controller $\nn:\R^{\nstate}\to\R^{\nctrl}$ such that
    \begin{align}\label{eq:fi_con}
        \insetnext := \dynmat\midset + \dynvec \subseteq \insetnow,
    \end{align}
    where $\midset = \{\begin{bmatrix}
    \xv_{\ts}\tp & \nn({\xv_{\ts}})\tp
\end{bmatrix}\tp\mid\xv_{\ts}\in\insetnow\}$.
\end{problem}

\subsubsection{Method}
We can cast \eqref{eq:fi_con} as Problem \ref{prob:nn_avoid} by setting $\collider = \insetnext \cap (\indomain \setminus \insetnow)$ for some hyperrectangle $\indomain \supset \insetnow$, since driving $\insetnext$ within $\insetnow$ implies that $\insetnext$ has been driven out of $\indomain \setminus \insetnow$.

Note that formulating the unsafe region as $\indomain \setminus \insetnow$ would make \eqref{eq:empty_milp} a sufficient (but not necessary) condition for \eqref{eq:fi_con}, since collision may occur only at the boundary but not the interior of the unsafe region.
If $\insetnow$ is full-dimensional, we can overcome this by instead checking for the collision between $\insetnext$ and $\indomain \setminus (\insetnow \oplus \hyzono(\epsilon\eye, [], \zeros, [], [], []))$ for some small $\epsilon > 0$ when solving for \eqref{eq:empty_milp}, which slightly shrinks the unsafe set to make sure $\collider$ is full-dimensional when non-empty.

\begin{rem}
    In principle, we can also synthesize forward-invariant controllers for \textit{neural network dynamics} (using Proposition \ref{prop:scaledzono_reach}) and even general nonlinear dynamics (using techniques such as \cite{althoff2008reachability}).
    However, we note that not all choices of $\insetnow$ can be made forward-invariant (see \cite[Example 1]{harapanahalli2024certified}), and it is unclear how one can construct a ``forward-invariant-able'' $\insetnow$ for learned or general dynamics.
    To not overstate our contribution, we opted to demonstrate our method only on affine dynamical systems.
\end{rem}

\subsubsection{Setup}
We demonstrate our method on a double integrator with $\dynmat = \begin{bmatrix}
    \begin{bmatrix}1 & 0.1 & 0\end{bmatrix}\tp & \begin{bmatrix}0 & 1 & 0.1\end{bmatrix}\tp 
\end{bmatrix}\tp$ and $\dynvec = \zeros$.
We set the safe region as:
\begin{align}
\begin{split}
    \insetnow=&\hyzono(\\
    &\begin{bmatrix}
        0 & -1 & \zeros_{1\times3} & -1 & -0.75 & \zeros_{1\times2}\\
        -1 & 1 & \zeros_{1\times3} & 1 & 1.5 & \zeros_{1\times2}
    \end{bmatrix}, \begin{bmatrix}
        0.75\\
        -1.5
    \end{bmatrix},\begin{bmatrix}
        1 \\ -1
    \end{bmatrix}, \\
    &\begin{bmatrix}
        0 & 0 & 0 & 0 & 0 & 1 & 0 & 1 & 0\\
        0 & 0 & 0 & 0 & 0 & 0 & 0.75 & 0 & 0.75\\
        -1 & 1 & 1 & 0 & 0 & 0 & 0 & 0 & 0\\
        0 & 1 & 0 & 1 & 0 & 0 & 0 & 0 & 0\\
        1 & -1 & 0 & 0 & 1 & 0 & 0 & 0 & 0
    \end{bmatrix},\begin{bmatrix}
        -1.5\\-1\\0.75\\1\\0.25
    \end{bmatrix}\\
    &\begin{bmatrix}
        0.5&0.5&0.75&1&0.25
    \end{bmatrix}\tp),
\end{split}
\end{align}
which is obtained by the union of two convex regions that can be rendered forward-invariant \cite{harapanahalli2024certified} and reducing the number of generators and constraints using \cite{bird2023hybrid}.
We also have $\indomain = \hyzono(150\eye, [], \zeros, [], [], [])$, which was set to be much larger than $\insetnow$ to prevent gradient descent from driving $\insetnext$ entirely out of $\indomain$.
The resulting unsafe set $\indomain \setminus \insetnow$ has 84 continuous generators, 22 binary generators, and 60 constraints.
These sets are visualized in Fig.~\ref{fig:front_figure}.

Due to the simplicity of the system, we chose a neural network controller with 1 hidden layer of width 3, pretrained with a bad policy $\uv_{\ts} = \begin{bmatrix}
    -2 & -1
\end{bmatrix}\tp \xv_{\ts}$ using MSE loss and the Adam optimizer \cite{kingma2014adam} such that $\insetnext\nsubseteq\insetnow$.
We then apply our method with $\mus = 0.1$, $\nscale=5$, and $\alphas = 1{,}000$, using the Adam optimizer \cite{kingma2014adam} with a learning rate of $0.001$.
We check for \eqref{eq:fi_con} every $5$ iterations.

\subsubsection{Results and Discussion}
The results of our demo is shown in Fig.~\ref{fig:front_figure}.
A forward-invariant controller was successfully trained after 625 iterations.
The total computation time is $12.558 ~\unit{s}$, with training taking $0.017\pm0.002~\unit{s}$ per iteration and verification taking $0.016\pm0.005~\unit{s}$ per 5 iterations.

Although the double integrator system is simple, the problem is very difficult due to the nonlinearity of the neural network and the non-convexity of the complement set.
As such, we observed our method's performance to be highly sensitive to changes in hyperparameters.
For example, na\"ively choosing $\nscale=9$ would result in $\scaledzono(\insetnow, \rs, \nscale)=\emptyset\ \forall \rs < 1$, which gives bad gradient information.
On the other hand, choosing too high of a learning rate or using more na\"ive optimizers such as stochastic gradient descent (SGD) also prevented the training from converging to a safe solution.

\subsection{Reach-Avoid for Black-Box Dynamical Systems}\label{sec:neuralparc}

A common framework to ensure safety and liveness of robots is the \textit{reach-avoid} problem, where a robot must reach a goal while avoiding obstacles.
This is an especially difficult problem when the robot is \textit{black-box}, meaning that the dynamics are difficult to model analytically, but a dataset of interactions with the system is readily available.

Our recent work NeuralPARC \cite{chung2024guaranteed} addresses this problem by using neural networks to model the robot's behavior, then leverages backward reachability analysis to provide reach-avoid guarantees for black-box systems traveling through narrow gaps.
In this section, we will show how our hybrid zonotope training method can be applied to the same problem setting, and compare our results with those of NeuralPARC.

\subsubsection{Problem Statement}
Formally, we tackle the following:
\begin{problem}[Black-Box Reach-Avoid Problem]\label{prob:neuralparc}
    Consider translation-invariant trajectories of a robot with parameterized policies $\xv(\ts) = \vc{g}(\paramv, \dv, \ts) + \xv_0$, where $\xv\in\indomain\subset\R^{\nstate}$ is the position, $\xv_0 \in \initset \subset \indomain$ is the initial position, $\paramv\in\paramD\subset\R^{\nparam}$ represents \textit{trajectory parameters} (e.g.\ controller gains, desired goal position), $\ts \in [0, \tf]$ is time, $\tf \in \R_{+}$ is the final time, and $\dv(\cdot)\in\Dfun:=\{\phi:[0, \tf]\to\D\}$ is the disturbance function.
    We assume $\vc{g}$ to be continuous but having an unknown analytic expression, although we are allowed to observe its output by providing $\paramv, \dv, \ts$ offline.

    Then, given obstacles $\Xobs \subset \indomain$, a goal set $\Xgoal \subset \indomain$, and the initial set $\initset$, find a policy $\nnk:\R^{\nstate}\to\R^{\nparam}$ such that:
    \begin{align}
        \vc{g}(\nnk(\xv_0), \dv, \ts) + \xv_0 &\nsubseteq \Xobs,\label{eq:obs_avoid}\\
        \vc{g}(\nnk(\xv_0), \dv, \tf) + \xv_0 &\subseteq \Xgoal,\label{eq:goal_reach}
    \end{align}
    for all $\xv_0 \in \initset$, $\ts \in [0, \tf]$, and $\dv(\cdot)\in\Dfun$.
\end{problem}
\begin{rem}
    Unlike NeuralPARC, our method does not require the trajectories to be translation-invariant, and $\xv(\ts)$ can be states other than position.
    We formulated the problem as such only to perform a fair comparison.
\end{rem}

\subsubsection{Method}
We begin by training a neural network $\nng: \R^{\nparam + 1} \to \R^{\nstate}$ to approximate $\vc{g}$, with features $(\paramv, \ts)$ and labels $(\vc{g}(\paramv, \dv, \ts))$ using MSE loss and the Adam optimizer \cite{kingma2014adam} with uniform sampling on $\paramv, \ts, \dv$ from $\paramD, [0, \tf], \D$.
Then, we approximate the modeling error with sampling:
\begin{align}
    \err = \max\{\abselem(\nng(\paramv, \ts) - \vc{g}(\paramv, \dv, \ts))\mid\paramv\in\paramD, \dv(\cdot)\in\Dfun, \ts\in[0, \tf]\},
\end{align}
where $\abselem$ is applied elementwise.
See \cite[Section IV.D]{chung2024goal} for a discussion on the validity of the approach.

We can now formulate Problem \ref{prob:neuralparc} as Problem \ref{prob:nn_avoid}.
We first construct the \textit{time-parameterized forward reachable tube} (FRT) as $\outG$ by:
\begin{align}
    \HZT &= \hyzono\{0.5\tf, [], 0.5\tf, [], [], []\},\\
    \graphK &= \left\{\begin{bmatrix}
        \xv_0 \\
        \nnk(\xv_0)
    \end{bmatrix}\mid\xv_0\in\initset\right\}\times\HZT,\\
    \outG &= \begin{bmatrix}
        \zeros & \zeros & 1 & \zeros\\
        \eye_{\nstate} & \zeros & \zeros & \eye_{\nstate}
    \end{bmatrix}\left\{\begin{bmatrix}
        \xv_0 \\ \paramv \\ \ts \\ \nng(\paramv, \ts)
    \end{bmatrix}\mid\begin{bmatrix}
        \xv_0 \\ \paramv \\ \ts
    \end{bmatrix}\in\graphK\right\}.
\end{align}
Here, $\outG$ represents the future time and states of $\initset$ predicted by $\nng$ under the policy $\nnk$, and is exactly $\{\begin{bmatrix}
    \ts & \xv\tp_\ts
\end{bmatrix}\tp\mid\xv_\ts = \xv_0 + \nng(\nnk(\xv_0), \ts), \xv_0 \in \initset, \ts \in [0, \tf]\}$.
Importantly, it preserves the scaled hybrid zonotope properties from Lemma \ref{lem:scaledzono_op}.

Then, we can construct the unsafe set $\obs$ as:
\begin{align}
    \HZerr &= \hyzono\{\diag{\err}, [], \zeros, [], [], []\},\\
    \obst &= \HZT\times(\Xobs \oplus \robvol \oplus \HZerr),\label{eq:unsafe_time}\\
    \obstf &= \hyzono\{0, [], \tf, [], [], []\}\times((\indomain\setminus\Xgoal)\oplus\HZerr),\label{eq:unsafe_final}\\
    \obs &= \obst \cup \obstf,
\end{align}
where $\robvol$ is the polytopic overapproximation of the robot's circular volume \cite{chung2024goal, chung2024guaranteed}.
In essence, \eqref{eq:unsafe_time} and \eqref{eq:unsafe_final} encodes the constraints \eqref{eq:obs_avoid} and \eqref{eq:goal_reach} respectively by Minkowski summing with the modeling error (see \cite[Lemma 3]{chung2024guaranteed}) and applying Cartesian product with the corresponding time of the constraints.
We can now write \eqref{eq:avoid_con} as $\collider = \outG \cap \obs$.

\subsubsection{Demo Setup}
We compare our method with NeuralPARC on a parallel-parking vehicle under extreme drifting dynamics with the same $\vc{g}$, $\paramD$, $\Dfun$, $\tf$, and $\robvol$ as \cite{chung2024goal, chung2024guaranteed}.
We set $\indomain = \hyzono(50\eye, [], \zeros, [], [], [])$, $\initset = \hyzono(0.1\eye, [], \zeros, [], [], [])$, and $\Xgoal$ and $\Xobs$ as shown in Fig.~\ref{fig:drifting_fig} to mimic the setup in \cite{chung2024goal}.

\begin{figure}[t]
\vspace*{1.7mm}
\centering
    \includegraphics[width=1\columnwidth]{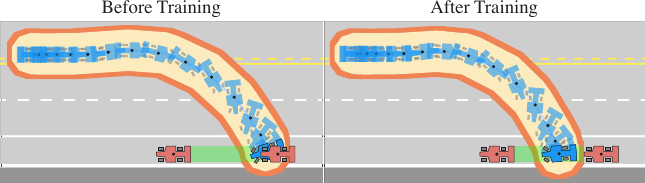}
\caption{
Illustration of the forward reachable tube for a drifting vehicle before (left) and after (right) training with our method.
We show the goal (green), the obstacles (red), and the forward reachable tube of the neural network model (blue) summed with the robot's volume (yellow) and further summed with the modeling error (orange).
A representative timelapse is shown for each picture.
The vehicle crashed with the obstacles before training, but is guaranteed to reach and avoid after applying our method.
}
\label{fig:drifting_fig}
\vspace*{-0.5cm}
\end{figure}

We first pretrain a policy $\nnk$ with 1 hidden layer of width 20 to approximate a bad motion plan $\paramv = [10, \frac{7}{36}\pi]\tp$ (the center of $\paramD$) such that the trajectories start off colliding with the obstacles.
With a trajectory model $\nng$ of 4 hidden layers of width 6, we applied our method with $\mus=0.1$, $\nscale = 2$, and $\alphas = 40$ using the Adam optimizer \cite{kingma2014adam} with a learning rate of 0.001.
We check whether $\collider=\emptyset$ every 5 iterations.

For NeuralPARC, we use the same problem setup with timestep $\Delta\ts = 0.1$ and an initial seed of $\paramv = [10, \frac{7}{36}\pi]\tp$.
100 points were sampled in each mode when searching for safe trajectory parameters.
The trajectory model has the same size as $\nng$ and is trained on the same set of data.

\subsubsection{Results and Discussion}
The results of our method are shown in Fig.~\ref{fig:drifting_fig}.
For our method, a safe policy was found after $13.742~ \unit{s}$ with 80 iterations.
On average, training took $0.166\pm0.006~\unit{s}$ per iteration and verification took $0.039\pm0.002~\unit{s}$ per 5 iterations.
For NeuralPARC, computing the full backward reach-avoid set took $9.526~\unit{s}$ for a total of 75 modes.
On average, finding a safe trajectory parameter took $0.601\pm0.089~\unit{s}$ per mode.

While the results show both methods to be effective, a direct comparison is difficult because of their slightly different objectives and susceptibility to hyperparameter changes.
In general, since NeuralPARC reasons with backward reachability, it is better suited for when the agent does not know where it should start to achieve safety.
Since our method reasons with forward reachability, there may not exist safe solutions for our chosen $\initset$ at all.
On the other hand, NeuralPARC scales linearly with the number of timesteps and the number of modes in the neural network, whereas our method scales linearly with the number of neurons and does not scale with timesteps.
Thus, our method is more suitable for trajectories with longer horizons or that require a larger network to model.
Finally, since our method has a smaller inference time to obtain the safe trajectory parameter (less than $0.001~\unit{s}$), our method is better-suited for scenarios where rapid update to the motion plans is desired, such as when the agent is traveling very quickly.
\section{Conclusion}\label{sec:conclusion}

This work proposes a new training method for enforcing constraint satisfaction by extracting learning signals from neural network reachability analysis using hybrid zonotopes.
The method is exact and can handle non-convex input sets and unsafe regions, and is shown to be fast and scalable to different network sizes and set complexities, significantly outperforming our previous work \cite{chung2021constrained}.
We belive this opens new approaches to synthesizing forward-invariant controllers and safe motion plans.

\subsubsection*{Limitations}
We observed four major limitations of our method.
First, our method is sensitive to the choice of hyperparameters.
While we currently rely on intuition about the problem setting to select these parameters, we believe development of a more rigorous procedure is possible, similar to many other works in deep learning \cite{feurer2019hyperparameter}.
Second, the computation time of our method is bottlenecked by matrix inverse computations.
Instead, future work could explore alternate gradient computation strategies such as differentiation of the homogeneous self-dual (HSD) formulation \cite{mandi2020interior} instead of the KKT conditions.
Third, our method cannot verify whether the given problem is solvable, which is significant for the examples in Sec.~\ref{sec:app}.
As such, future work could explore using gradient information to train the input set and the neural network simultaneously \cite{yang2024lyapunov}.
Last, our method is limited to fully-connected networks with ReLU activation functions.
However, we are optimistic that future work can extend to convolution neural networks (CNN) and recurrent neural networks (RNN) similar to \cite{tran2020verification, tran2023verification}.

\renewcommand{\bibfont}{\normalfont\footnotesize}
{\renewcommand{\markboth}[2]{}
\printbibliography}

\end{document}